\newtheorem{definition}{Definition}
\newtheorem{theorem}{Theorem}
\def\eqref#1{equation~\ref{#1}}
\def\1{\bm{1}}
\DeclareMathAlphabet{\mathsfit}{\encodingdefault}{\sfdefault}{m}{sl}
\SetMathAlphabet{\mathsfit}{bold}{\encodingdefault}{\sfdefault}{bx}{n}
\title{Guardian: Decoupling Exploration from Safety in Reinforcement Learning}
\author{
\textbf{Kaitong Cai}\textsuperscript{1}\thanks{Equal contribution.} \quad
\textbf{Jusheng Zhang}\textsuperscript{1}\footnotemark[1] \quad
Jing Yang\textsuperscript{1} \quad
Keze Wang\textsuperscript{1}\thanks{Corresponding author.} \\
\textsuperscript{1}Sun Yat-sen University \\
}
\begin{document}

\maketitle
\lhead{} 

\begin{abstract}
Hybrid offline–online reinforcement learning (O2O RL) promises both sample efficiency and robust exploration, but suffers from instability due to distribution shift between offline and online data. We introduce RLPD-GX, a framework that decouples policy optimization from safety enforcement: a reward-seeking learner explores freely, while a projection-based guardian guarantees rule-consistent execution and safe value backups. This design preserves the exploratory value of online interactions without collapsing to conservative policies. To further stabilize training, we propose dynamic curricula that gradually extend temporal horizons and anneal offline–online data mixing. We prove convergence via a contraction property of the guarded Bellman operator, and empirically show state-of-the-art performance on Atari-100k, achieving a normalized mean score of 3.02 (+45\% over prior hybrid methods) with stronger safety and stability. Beyond Atari, ablations demonstrate consistent gains across safety-critical and long-horizon tasks, underscoring the generality of our design. Extensive and comprehensive results highlight decoupled safety enforcement as a simple yet principled route to robust O2O RL, suggesting a broader paradigm for reconciling exploration and safety in reinforcement learning.
\end{abstract}

\section{Introduction}

Deep reinforcement learning (DRL) has demonstrated remarkable performance in complex decision-making tasks such as strategy games and robotic control~\citep{Z8,mnih2015human,8103164,li2018deepreinforcementlearningoverview,10.1007/s10994-021-05961-4,Z3}. However, its mainstream paradigms, i.e., purely online learning and purely offline learning, are constrained by sample inefficiency and out-of-distribution (OOD) generalization challenges, respectively~\citep{fujimoto2019bcq,kumar2020cql,gu2024reviewsafereinforcementlearning,Z1}. To address these issues, Offline-to-Online (O2O) reinforcement learning introduces a two-stage paradigm,~\citep{gulcehre2020rlunplugged,sönmez2024surveyofflineonlinelearningbased,Figueiredo_Prudencio_2024,Z2} where the agent is first pretrained on offline data and then fine-tuned online, thereby alleviating the weaknesses of both approaches. While effective in principle, this rigid two-stage design often exacerbates distributional shifts, induces compounded Bellman errors, and causes performance regressions~\citep{Figueiredo_Prudencio_2024,10.1016/j.eswa.2023.120495,Z7}. Recent research has thus moved toward integrated training loops, where offline data serve as a regularizing prior to guide safe exploration and suppress overgeneralization, while online samples immediately correct value overestimation caused by incomplete offline coverage~\citep{wang2025opt,niu2023trustsimulatordynamicsawarehybrid,Z5}. This synergy ensures a smooth transition between exploration and exploitation, yielding more stable and efficient performance improvements.

Nevertheless, hybrid offline-online reinforcement learning in practice often struggles to reconcile the mismatch between the behavior policy underlying offline trajectories and the evolving target policy of the agent~\citep{Wen_2024,sönmez2024surveyofflineonlinelearningbased,MAT}. This distribution gap leads to overly conservative behaviors, where the model performs well near the offline distribution but fails to explore new actions; to over-optimism, where values are overestimated in out-of-distribution regions; and to training oscillations, where conflicting learning signals undermine convergence~\citep{Figueiredo_Prudencio_2024,chen2023conservativestatevalueestimation,Z6}. Although prior approaches, such as RLPD and Hy-Q attempt to mitigate this issue, they remain limited. Specifically, both rely on injecting strong conservative biases: RLPD~\citep{ball2023rlpd} constrains exploration strictly within the offline distribution at the policy level, while Hy-Q ~\citep{lu2023hyq}systematically underestimates values of unknown actions at the critic level. Despite their different mechanisms, both approaches converge to the same dilemma: in order to stabilize the transition phase, they suppress the exploratory value of online data, leading to suboptimal policies that remain tethered to the offline distribution and fail to fully exploit the potential of online interaction.

To overcome this limitation, we propose the RLPD-GX framework. Its central contribution lies in decoupling policy learning from safety enforcement: a constraint-free \textit{Learner} is responsible for exploration and reward maximization, while a \textit{Guarded Bellman operator} projects online actions onto a predefined safe subspace to guarantee verifiable execution. This design preserves the intrinsic exploratory value of online interactions, while effectively filtering out spurious signals arising from random exploration that could misguide policy updates. To ensure a smooth transition from offline pretraining to online fine-tuning, RLPD-GX further introduces dynamic curriculum sampling: (i) \textbf{Dynamic Temporal Sampling (DTS)} establishes a temporal curriculum that transitions from dense~\citep{narvekar2020curriculumlearningreinforcementlearning,portelas2020automaticcurriculumlearningdeep,Z7}, short-horizon sampling to sparse, long-horizon sampling, thereby balancing local rule learning with long-term planning; and (ii) \textbf{Dynamic Symmetric Sampling (DSS)} smoothly adjusts the mixing ratio between offline and online data, starting with an offline-biased phase to distill prior knowledge and converging to a balanced 1:1 mixture, thereby avoiding conflicts and instabilities. This framework fundamentally reshapes the relationship between safety and optimality by decoupling the two, turning the pursuit of optimal policies under safety constraints from a zero-sum trade-off into a feasible, synergistic goal.

Extensive experiments are conducted to validate these claims. First, on the challenging Atari 100k benchmark~\citep{Atari100k}, we demonstrate that RLPD-GX achieves superior performance and sample efficiency compared to state-of-the-art online, offline, and hybrid baselines. Second, we perform a targeted analysis showing that our decoupled Guardian mechanism provides stronger safety guarantees and higher task returns than representative safe RL algorithms. Finally, a series of ablation studies confirms that the proposed dynamic sampling mechanisms are critical for achieving faster and more stable convergence. These results set a new benchmark, showing our design breaks the safety–performance trade-off.
\section{Problem Formulation}
\label{eq:hybrid_dataaaa}
We formalize the problem of safe reinforcement learning within a hybrid offline-online data regime. Our formulation is grounded in the established framework of Markov Decision Processes (MDPs)~\citep{puterman1994mdp,gu2024reviewsafereinforcementlearning,white1993survey}, extended to accommodate externally specified safety constraints and a composite data stream.
\subsection{MDPs in a Hybrid Data Regime}
\label{eq:hybrid_dataaaaaaa}
We model the environment as a \textbf{Markov Decision Process (MDP)}, defined by the tuple $(\mathcal{S}, \mathcal{A}, P, R, \gamma)$, representing the state space, action space, transition probability function $P: \mathcal{S} \times \mathcal{A} \times \mathcal{S} \to [0, 1]$, a bounded reward function $R: \mathcal{S} \times \mathcal{A} \to \mathbb{R}$, and a discount factor $\gamma \in [0, 1)$.
The agent's learning process is fueled by a \textbf{hybrid data stream}. Let $d_{\text{off}}(s, a)$ be the state-action marginal distribution of the static \textbf{offline dataset} $\mathcal{D}_{\text{off}}$, and let $d_{\text{on}}(s, a)$ be the corresponding distribution for the dynamically populated \textbf{online replay buffer} $\mathcal{B}_{\text{on}}$. The composite training distribution $d_{\text{train}}$ from which data is sampled is a time-varying convex combination:
\begin{equation}
\label{eq:hybrid_data}
d_{\text{train}}(s, a; t) = \lambda(t) d_{\text{on}}(s, a) + (1 - \lambda(t)) d_{\text{off}}(s, a)
\end{equation}
where $\lambda(t) \in [0,1]$ is a mixing coefficient at training step $t$, for which we propose a dynamic annealing schedule (detailed in Section \ref{sec:dss}). The primary theoretical challenge arises from the distributional shift between $d_{\text{off}}$ and the distribution induced by the evolving online policy $d^{\pi_\phi}$. This shift can lead to severe extrapolation errors and value overestimation for out-of-distribution (OOD) actions.

\subsection{Safety as a State-Action Constraint}

We depart from safe RL formulations that integrate safety as a penalty or cost. Instead, we define safety as a \textbf{hard constraint} on the policy's support. This is enforced by an external, deterministic predicate $g: \mathcal{S} \times \mathcal{A} \to \{0, 1\}$, where $g(s,a)=1$ signifies that action $a$ is permissible in state $s$.

This predicate defines a state-dependent safe action set:
\begin{equation}
\mathcal{A}_{\text{safe}}(s) \triangleq \{ a \in \mathcal{A} \mid g(s, a) = 1 \}
\label{eq:safe_action_set}
\end{equation}
Consequently, the space of valid policies is constrained to $\Pi_{\text{safe}}$, where
\begin{equation}
\Pi_{\text{safe}} = \{ \pi \in \Pi \mid \text{supp}(\pi(\cdot|s)) \subseteq \mathcal{A}_{\text{safe}}(s), \forall s \in \mathcal{S} \}
\end{equation}
Here, $\Pi$ is the set of all possible stochastic policies. This reframes the problem as a constrained optimization task rather than a multi-objective one.

\subsection{The Maximum Entropy Learning Objective}

The agent's goal is to find a policy $\pi_\phi \in \Pi$ that maximizes the \textbf{maximum entropy objective}. This objective encourages exploration and improves robustness by seeking both high returns and high policy entropy:
\begin{equation}
J(\pi_\phi) = \mathbb{E}_{\substack{s_t \sim \rho_{\pi_\phi} \\ a_t \sim \pi_\phi(\cdot|s_t)}} \left[ \sum_{t=0}^{\infty} \gamma^t \left( R(s_t, a_t) + \alpha \mathcal{H}(\pi_\phi(\cdot \mid s_t)) \right) \right],
\label{eq:max_entropy_objective}
\end{equation}
where $\rho_{\pi_\phi}$ is the state distribution induced by policy $\pi_\phi$, and $\mathcal{H}$ is the Shannon entropy. The corresponding soft Q-function, $Q^*(s,a)$, is the unique fixed point of the soft Bellman operator $\mathcal{T}^{\text{soft}}$:
\begin{equation}
(\mathcal{T}^{\text{soft}}Q)(s,a) = R(s,a) + \gamma \mathbb{E}_{s' \sim P(\cdot|s,a)} \left[ V_{\text{soft}}(s') \right],
\end{equation}
where the soft value function is $V_{\text{soft}}(s') = \mathbb{E}_{a' \sim \pi_\phi(\cdot|s')} [Q(s',a') - \alpha \log \pi_\phi(a'|s')]$. Our methodology adapts this operator to respect the safety constraints defined in Eq. \ref{eq:safe_action_set}. This adaptation is realized through our \textbf{Guarded Backup} mechanism for practical value updates (Section \ref{sec:guarded_backup}) and formalized by the \textbf{Guarded Bellman Operator} for theoretical analysis (Section \ref{sec:theory}). We formally prove that this adapted operator maintains the crucial contraction property, guaranteeing convergence, in Appendix \ref{sec:appendix_proof_main}.
\begin{figure*}[t]
 \centering
 \includegraphics[width=\columnwidth]{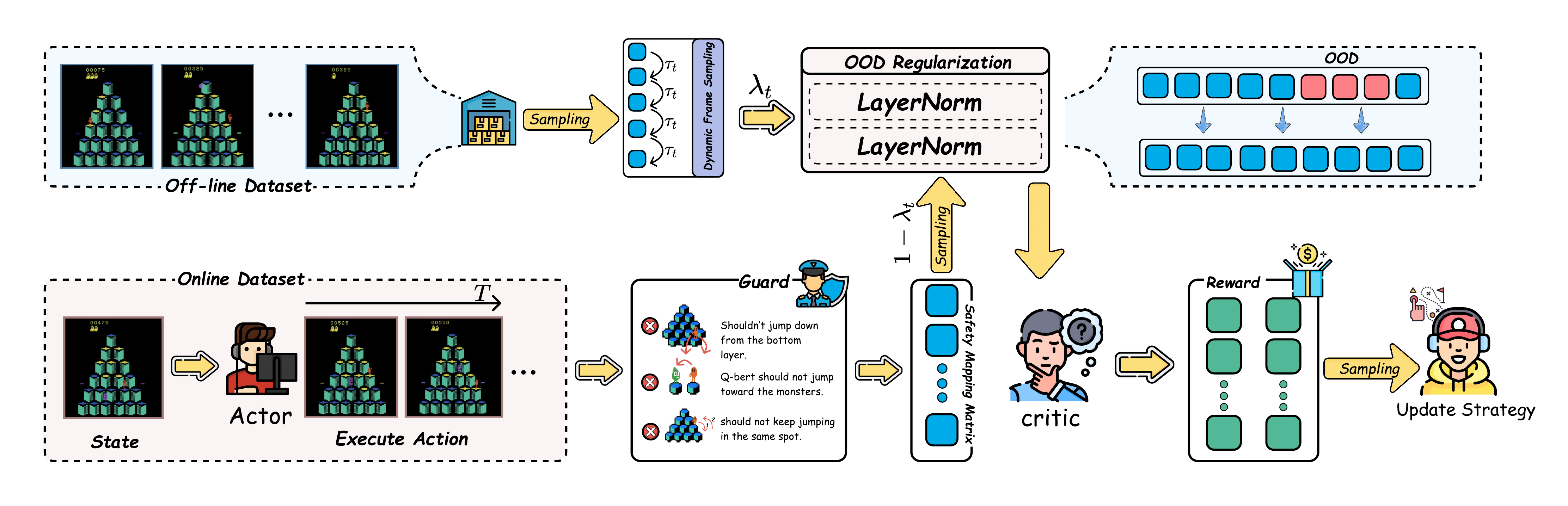}
    \vspace{-5mm} 
    \caption{Architecture of \textbf{RLPD-GX}. A Learner explores freely, while a projection-based Guardian ensures safe execution and guarded value backups. Dynamic sampling (DTS/DSS) with OOD regularization stabilizes hybrid offline--online learning, enabling safe yet exploratory policy updates.}
 \label{fig:main_flow}
 \vspace{-5pt}
\end{figure*}

\section{Methodology: Decoupling Learning from Safety Enforcement}
\label{subsec:appendix_implicationsaaa}
The cornerstone of \textsc{RLPD-GX} is the \textbf{decoupling of policy optimization from safety enforcement}. This principle avoids the complexities of multi-objective optimization, where conflicting gradients for reward maximization and constraint satisfaction can destabilize training. Instead, we structure the problem as a constrained optimization task solved via a projection-based method~\citep{wachi2020safereinforcementlearningconstrained,chow2017riskconstrainedreinforcementlearningpercentile}. This consists of two orthogonal components: a reward-seeking \textbf{Learner} and a safety-enforcing \textbf{Guardian}.
\subsection{System Architecture and Data Flow}
At each timestep $t$, the Learner's unconstrained policy $\pi_\phi$ proposes a raw action $a_t \sim \pi_\phi(\cdot|s_t)$. The Guardian module then projects this action onto the safe set defined in Eq. \ref{eq:safe_action_set} to produce a certified action $a_t^{\text{exec}}$:
\begin{equation}
a_t^{\text{exec}} = \Pi_{\text{safe}}(s_t, a_t) := \arg\min_{a' \in \mathcal{A}_{\text{safe}}(s_t)} \|a' - a_t\|_2^2.
\end{equation}
Only $a_t^{\text{exec}}$ is executed. This ensures the behavior policy, i.e., the policy generating the online data, is always within $\Pi_{\text{safe}}$, while the Learner's policy $\pi_\phi$ can maintain its full expressive capacity. The resulting transition $(s_t, a_t^{\text{exec}}, r_t, s_{t+1})$ is guaranteed to be safe and is stored in $\mathcal{B}_{\text{on}}$. The Learner thus optimizes on a sanitized data stream, eliminating any direct exposure to unsafe actions and their consequences. The complete training procedure is summarized in Algorithm \ref{alg:rlpd-gx}.

\subsection{The Learner: Principled Optimization on Heterogeneous Data}
The Learner is designed for stable and efficient optimization, addressing the challenges of heterogeneous data through three key mechanisms.
\paragraph{(a) Dynamic Temporal Sampling (DTS)} To mitigate high variance in initial learning stages, DTS implements a curriculum over the temporal structure of sampled data. By initially prioritizing short, contiguous sequences, DTS provides low-variance gradient estimates for learning local dynamics. The sampling interval $\Delta(t)$ gradually expands:
\begin{equation}
\small
\Delta(t) = \Delta_{\min} + (\Delta_{\max} - \Delta_{\min}) \cdot \left(\frac{t}{T}\right)^\beta.
\end{equation}
This allows the agent to build a foundation of basic behaviors before tackling long-term credit assignment, promoting a more stable convergence trajectory.
\paragraph{(b) Dynamic Symmetric Sampling (DSS)}\label{sec:dss} To manage the non-stationary nature of the training distribution, DSS provides a distributional annealing schedule. It smoothly varies the mixing parameter $\lambda(t)$ from the hybrid data distribution (Eq. \ref{eq:hybrid_data}) for online data:
\begin{equation}
\small
\lambda(t) = \lambda_{\min} + (\lambda_{\max} - \lambda_{\min}) \cdot \sigma\left(k \cdot \left(t - \frac{T}{2}\right)\right).
\end{equation}
This prevents abrupt shifts in the data landscape, allowing the function approximators to adapt gradually from offline knowledge distillation to online refinement.

\subsubsection{Guarded Backups for Consistent Value Learning}\label{sec:guarded_backup}
For the value function to be consistent with the actual execution policy, Bellman backups must only consider safe actions. We first define a safe policy distribution, $\pi_\phi^{\text{safe}}$, by re-normalizing $\pi_\phi$ over the safe action set $\mathcal{A}_{\text{safe}}(s')$ (defined in Eq. \ref{eq:safe_action_set}):
\begin{equation}
\small
\pi_{\phi}^{\text{safe}}(a'|s') = \frac{\pi_\phi(a'|s') \cdot \mathbb{I}(a' \in \mathcal{A}_{\text{safe}}(s'))}{\sum_{a'' \in \mathcal{A}_{\text{safe}}(s')} \pi_\phi(a''|s')}.
\end{equation}
where $\mathbb{I}(\cdot)$ is the indicator function. The target value $y$ for a transition $(s, a, r, s')$ is then constructed using this safe policy:
\begin{equation}
\small
y = r + \gamma \left( \mathbb{E}_{a' \sim \pi_{\phi}^{\text{safe}}(\cdot \mid s')} [ Q_{\min}(s', a')] - \alpha \mathcal{H}(\pi_{\phi}^{\text{safe}}(\cdot \mid s')) \right)
\label{eq:guarded_target}
\end{equation}
where $Q_{\min}$ is the pessimistic estimate from a conservative Q-ensemble. This guarded target ensures that the Learner's value estimates align with the outcomes of the Guardian's safety enforcement.
\subsection{Theoretical Foundation: Convergence of Guarded Value Iteration}\label{sec:theory}
A critical theoretical question is whether the introduction of the Guardian's projection preserves the convergence properties of value-based reinforcement learning. We demonstrate that it does by defining a Guarded Bellman Operator and proving it is a contraction mapping.
\begin{definition}[Guarded Bellman Operator]
For any Q-function $Q: \mathcal{S} \times \mathcal{A} \to \mathbb{R}$, the Guarded Bellman Operator $\mathcal{T}_{\Pi}$ is a mapping from $Q$ to $\mathcal{T}_{\Pi}Q$ such that for any state-action pair $(s,a)$, the maximization is performed over the safe action set from Eq. \ref{eq:safe_action_set}:
\begin{equation}
\small
(\mathcal{T}_{\Pi} Q)(s,a) \triangleq R(s,a) + \gamma \mathbb{E}_{s' \sim P(\cdot \mid s,a)} \left[ \max_{a' \in \mathcal{A}_{\text{safe}}(s')} Q(s', a') \right]
\end{equation}
\end{definition}
\begin{theorem}[Contraction]
\label{thm:contraction}
The operator $\mathcal{T}_{\Pi}$ is a $\gamma$-contraction in the max norm $\|\cdot\|_{\infty}$.
\end{theorem}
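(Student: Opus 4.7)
The plan is to mirror the classical proof that the Bellman optimality operator is a $\gamma$-contraction, exploiting the key structural fact that the safe set $\mathcal{A}_{\text{safe}}(s')$ from Eq.~\ref{eq:safe_action_set} depends only on the state $s'$ through the predicate $g$, and not on the Q-function being operated on. This decoupling of the constraint from the iterate is what keeps the argument clean: replacing $\max_{a'\in\mathcal{A}}$ by $\max_{a'\in\mathcal{A}_{\text{safe}}(s')}$ simply changes the index set but not the algebraic structure of the operator.

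The first step is to fix two arbitrary bounded Q-functions $Q_1, Q_2: \mathcal{S}\times\mathcal{A}\to\mathbb{R}$ and an arbitrary pair $(s,a)$, and write
\begin{equation*}
(\mathcal{T}_{\Pi}Q_1)(s,a) - (\mathcal{T}_{\Pi}Q_2)(s,a) = \gamma\, \mathbb{E}_{s'\sim P(\cdot\mid s,a)}\!\left[\max_{a'\in\mathcal{A}_{\text{safe}}(s')} Q_1(s',a') - \max_{a'\in\mathcal{A}_{\text{safe}}(s')} Q_2(s',a')\right],
\end{equation*}
since the reward term $R(s,a)$ cancels. The second step is to invoke the elementary inequality $\bigl|\max_{x\in X} f(x) - \max_{x\in X} g(x)\bigr| \le \max_{x\in X} |f(x)-g(x)|$, which holds for any non-empty set $X$ on which both maxima are attained. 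Applying this pointwise in $s'$ (with $X = \mathcal{A}_{\text{safe}}(s')$) and pushing absolute values through the expectation via Jensen gives
\begin{equation*}
\bigl|(\mathcal{T}_{\Pi}Q_1 - \mathcal{T}_{\Pi}Q_2)(s,a)\bigr| \le \gamma\, \mathbb{E}_{s'}\!\left[\max_{a'\in\mathcal{A}_{\text{safe}}(s')} |Q_1(s',a') - Q_2(s',a')|\right] \le \gamma\, \|Q_1 - Q_2\|_{\infty},
\end{equation*}
where the last step uses that $\mathcal{A}_{\text{safe}}(s')\subseteq\mathcal{A}$ so the inner max is bounded by the sup-norm. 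Taking the supremum over $(s,a)$ on the left yields $\|\mathcal{T}_{\Pi}Q_1 - \mathcal{T}_{\Pi}Q_2\|_{\infty} \le \gamma\,\|Q_1 - Q_2\|_{\infty}$, which is the contraction claim.

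The main obstacle, and the only delicate point, is ensuring that the inner maximum is well-defined: we need $\mathcal{A}_{\text{safe}}(s')\neq\emptyset$ for every reachable $s'$, and, in continuous-action settings, we need the max to actually be attained (otherwise one works with $\sup$ and a standard $\varepsilon$-argument). In the discrete Atari setting of the paper the attainment is automatic, so the cleanest write-up is to state non-emptiness of $\mathcal{A}_{\text{safe}}(s')$ as a standing assumption (it is implicit in the projection $\Pi_{\text{safe}}$ being defined) and, if desired, add a brief remark extending to the continuous case by replacing $\max$ with $\sup$ and noting that the inequality $|\sup f - \sup g|\le \sup|f-g|$ still holds. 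Once that regularity hurdle is acknowledged, Banach's fixed-point theorem immediately gives existence and uniqueness of a guarded optimal value function and convergence of guarded value iteration, which is the property invoked in Section~\ref{sec:guarded_backup}.
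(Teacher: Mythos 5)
Your proposal is correct and follows essentially the same route as the paper's own proof: cancel the reward term, apply $|\max f - \max g| \le \max |f-g|$ over $\mathcal{A}_{\text{safe}}(s')$, push the absolute value through the expectation, and bound by $\|Q_1 - Q_2\|_\infty$, with non-emptiness of the safe sets stated as a standing assumption exactly as the paper does in its appendix. The only addition is your remark on the continuous-action $\sup$ variant, which the paper omits since it assumes finite spaces.
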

\begin{proof}[Proof Sketch]
Let $Q_1$ and $Q_2$ be two arbitrary Q-functions. We examine the max-norm distance between their mappings under $\mathcal{T}_{\Pi}$:
\begin{align*}
\small
    \| \mathcal{T}_{\Pi}Q_1 - \mathcal{T}_{\Pi}Q_2 \|_{\infty} &= \max_{s,a} | (\mathcal{T}_{\Pi}Q_1)(s,a) - (\mathcal{T}_{\Pi}Q_2)(s,a) | \\
    &= \max_{s,a} \left| \gamma \mathbb{E}_{s'} \left[ \max_{a' \in \mathcal{A}_{\text{safe}}(s')} Q_1(s',a') - \max_{a' \in \mathcal{A}_{\text{safe}}(s')} Q_2(s',a') \right] \right| \\
    &\le \gamma \max_{s,a} \mathbb{E}_{s'} \left| \max_{a' \in \mathcal{A}_{\text{safe}}(s')} Q_1(s',a') - \max_{a' \in \mathcal{A}_{\text{safe}}(s')} Q_2(s',a') \right| \\
    &\le \gamma \max_{s'} \max_{a'' \in \mathcal{A}_{\text{safe}}(s')} | Q_1(s',a'') - Q_2(s',a'') | \\
    &\le \gamma \max_{s',a'} | Q_1(s',a') - Q_2(s',a') | = \gamma \| Q_1 - Q_2 \|_{\infty}
\end{align*}
The key step relies on the property that $|\max_{x \in X} f(x) - \max_{x \in X} g(x)| \le \max_{x \in X} |f(x) - g(x)|$. Since $\gamma < 1$, the operator is a contraction.
\end{proof}

\paragraph{Implication} By the Banach fixed-point theorem, Theorem \ref{thm:contraction} ensures that repeated application of $\mathcal{T}_{\Pi}$ converges to a unique fixed point $Q^*_{\Pi}$, the optimal Q-function for the safety-constrained MDP. This result shows our decoupled framework optimizes toward a well-defined, provably safe value function with preserved convergence guarantees. The complete proof is provided in Appendix \ref{sec:appendix_proof_main}.
\begin{algorithm}[H]
\caption{\textsc{RLPD-GX}: Decoupled Learning and Safety Enforcement}
\label{alg:rlpd-gx}
\begin{algorithmic}[1]
\State \textbf{Initialize:} Learner policy $\pi_\phi$, Q-function ensemble $\{Q_{\theta_i}\}_{i=1}^N$, target networks $\{Q_{\theta'_i}\}_{i=1}^N$.
\State \textbf{Initialize:} Offline dataset $\mathcal{D}_{\text{off}}$, empty online replay buffer $\mathcal{B}_{\text{on}}$.
\State \textbf{Initialize:} Safety predicate function $g(s,a)$ to define $\mathcal{A}_{\text{safe}}(s)$ from Eq. \ref{eq:safe_action_set}.

\For{training step $t = 1, \dots, T$}
    \State \Comment{\textcolor{blue}{\textit{--- Online Interaction Phase (Guardian Enforces Safety) ---}}}
    \State Observe current state $s_t$.
    \State Learner proposes a raw action: $a_t \sim \pi_\phi(\cdot|s_t)$.
    \State Guardian projects to a safe action: $a_t^{\text{exec}} \gets \arg\min_{a' \in \mathcal{A}_{\text{safe}}(s_t)} \|a' - a_t\|_2^2$.
    \State Execute $a_t^{\text{exec}}$, observe reward $r_t$ and next state $s_{t+1}$.
    \State Store sanitized transition $(s_t, a_t^{\text{exec}}, r_t, s_{t+1})$ in online buffer $\mathcal{B}_{\text{on}}$.

    \State
    \State \Comment{\textcolor{blue}{\textit{--- Learner Update Phase (Learner Optimizes Policy) ---}}}
    \State Update DSS mixing parameter $\lambda(t)$ and DTS sampling interval $\Delta(t)$.
    \State Sample minibatch $\mathcal{B}_{\text{off}} \sim \mathcal{D}_{\text{off}}$ and $\mathcal{B}_{\text{on}} \sim \mathcal{B}_{\text{on}}$ according to $\lambda(t)$ and $\Delta(t)$.
    \State Form combined batch $\mathcal{B} \gets \mathcal{B}_{\text{off}} \cup \mathcal{B}_{\text{on}}$.
    
    \State \Comment{\textit{Calculate Guarded Backup Target}}
    \State For each $(s, a, r, s')$ in $\mathcal{B}$, define safe policy $\pi_{\phi}^{\text{safe}}(\cdot|s')$ by re-normalizing $\pi_\phi(\cdot|s')$ over $\mathcal{A}_{\text{safe}}(s')$.
    \State Compute target value $y$ using pessimistic target Q-ensemble $Q'_{\min} = \min_i Q_{\theta'_i}$:
    \Statex \qquad $y \gets r + \gamma \left( \mathbb{E}_{a' \sim \pi_{\phi}^{\text{safe}}(\cdot \mid s')} [ Q'_{\min}(s', a')] - \alpha \mathcal{H}(\pi_{\phi}^{\text{safe}}(\cdot \mid s')) \right)$ \Comment{Eq. \ref{eq:guarded_target}}

    \State \Comment{\textit{Update Critic (Q-functions)}}
    \State Update each Q-function $Q_{\theta_i}$ by minimizing soft Bellman error: $\mathcal{L}_{Q_i} = \mathbb{E}_{\mathcal{B}} \left[ (Q_{\theta_i}(s, a) - y)^2 \right]$.
    
    \State \Comment{\textit{Update Actor (Policy)}}
    \State Update policy $\pi_\phi$ via: $\mathcal{L}_{\pi} = \mathbb{E}_{s \sim \mathcal{B}, a \sim \pi_\phi} \left[ \alpha \log(\pi_\phi(a|s)) - \min_i Q_{\theta_i}(s, a) \right]$.

    \State \Comment{\textit{Update Target Networks}}
    \State Update target Q-networks softly: $\theta'_i \gets \tau \theta_i + (1-\tau)\theta'_i$ for all $i$.
\EndFor
\end{algorithmic}
\end{algorithm}
\section{Experiments}
\subsection{Main Comparisons and Analyses}
\textbf{Experiment Setup.} We evaluate our method RLPD-GX  on the Atari 100k benchmark, the gold standard for assessing sample efficiency in reinforcement learning. It comprises 26 diverse games, challenging agents to learn effective policies within a strict budget of 100,000 environment steps (400k frames). The explicit rule-based constraints in Atari games, such as life penalties, make it an ideal platform for studying safety and constraint adherence. Furthermore, the availability of official offline datasets, such as RL Unplugged, facilitates research across offline, online, and offline-to-online (O2O) paradigms.
We compare \textbf{RLPD-GX} against three categories of representative baselines: 
\textbf{Offline learning} (\textbf{JOWA} \citep{jowa} with adaptive replay; \textbf{EDT} \citep{edt}, a model-based approach), 
\textbf{Online learning} (\textbf{STORM} \citep{STORM}, goal-oriented with a transitive model; \textbf{DreamerV3} \citep{DreamerV3}, world-model based; \textbf{DramaXS} \citep{DramaXS}, exploration-focused; \textbf{BBF} \citep{bbf}, value-gradient based; \textbf{EZ-V2} \citep{EfficientZero}, simplified and efficient), 
and \textbf{Hybrid learning (O2O)} (\textbf{RLPD} \citep{RLPD}, distributed training \textbf{MuZero Unplugged} \citep{MuZero}, combining model-based and model-free).

\begin{table*}[t]
\centering
\label{tab:main_experiment} 
\renewcommand{\arraystretch}{1.15}
\resizebox{\textwidth}{!}{
\begin{tabular}{lcccccccccccccccc}
\toprule
Game & Random & Human & RLPD & MuZero Unplugged & JOWA & EDT & STORM & DreamerV3 & DramaXS & BBF & EZ-V2 & RLPD-GX \\
\midrule
Alien & 228 & 7128 & 1264 & 746 & \underline{1726} & 1664 & 984 & 959 & 820 & 1173 & 1558 & \textbf{2365} \\
Amidar & 6 & 1720 & 162 & 76 & 215 & 102 & 205 & 139 & 131 & \underline{245} & 185 & \textbf{286} \\
Assault & 222 & 742 & 1826 & 643 & \textbf{2302} & 1624 & 801 & 706 & 539 & 2091 & 1758 & \underline{2136} \\
Asterix & 210 & 8503 & 1864 & \underline{29062} & 9624 & 11765 & 1028 & 932 & 1632 & 3946 & \textbf{61810} & 23672 \\
Bank Heist & 14 & 753 & 543 & 593 & 32 & 14 & 641 & 649 & 137 & 733 & \underline{1317} & \textbf{1582} \\
BattleZone & 2360 & 37188 & 16240 & 11286 & 18627 & 17540 & 13540 & 12250 & 10860 & \textbf{24460} & 14433 & \underline{20326} \\
Boxing & 0 & 12 & 72 & 62 & \underline{89} & 82 & 80 & 78 & 78 & 86 & 75 & \textbf{95} \\
Breakout & 2 & 30 & \underline{426} & 390 & 376 & 235 & 16 & 31 & 7 & 371 & 400 & \textbf{562} \\
ChopperCommand & 811 & 7388 & 2346 & 1764 & 3813 & 3576 & 1888 & 420 & 1642 & \textbf{7549} & 1197 & \underline{4624} \\
CrazyClimber & 10780 & 35829 & 87264 & 93268 & 97682 & \underline{114253} & 66776 & 97190 & 83931 & 58432 & 112363 & \textbf{124632} \\
DemonAttack & 152 & 1971 & 7628 & 8496 & 3548 & \underline{21752} & 165 & 303 & 201 & 13341 & \textbf{22774} & 12362 \\
Freeway & 0 & 30 & 21 & 23 & 18 & 25 & \underline{34} & 0 & 15 & 26 & 0 & \textbf{36} \\
Frostbite & 65 & 4335 & 3726 & \underline{4051} & 1824 & 2164 & 1316 & 909 & 785 & 2385 & 1136 & \textbf{4264} \\
Gopher & 258 & 2412 & 2342 & 2640 & \textbf{8460} & \underline{7635} & 8240 & 3730 & 2757 & 1331 & 3869 & 4624 \\
Hero & 1027 & 30826 & 6372 & 4326 & \underline{12476} & \textbf{17645} & 11044 & 11161 & 7946 & 7819 & 9705 & 7426 \\
Jamesbond & 29 & 303 & 756 & 602 & 864 & 642 & 509 & 445 & 372 & \underline{1130} & 468 & \textbf{1276} \\
Kangaroo & 52 & 3035 & 6836 & 4326 & \underline{7642} & \textbf{8970} & 4208 & 4098 & 1384 & 6615 & 1887 & 6824 \\
Krull & 1598 & 2666 & 8924 & 5673 & 9230 & 8624 & 8413 & 7782 & \underline{9693} & 8223 & 9080 & \textbf{9762} \\
KungFuMaster & 258 & 22736 & 15264 & 20326 & 18624 & 16462 & 26183 & 21420 & 23920 & 18992 & \textbf{28883} & \underline{24382} \\
MsPacman & 307 & 6952 & 3869 & \underline{4539} & 1962 & 2370 & 2673 & 1327 & 2270 & 2008 & 2251 & \textbf{5024} \\
Pong & -21 & 15 & 14 & 18 & 19 & 14 & 11 & 18 & 15 & 17 & \underline{21} & \textbf{21} \\
PrivateEye & 25 & 69571 & \underline{4632} & 3726 & 302 & 162 & \textbf{7781} & 882 & 90 & 41 & 100 & 4236 \\
Qbert & 164 & 13455 & 10624 & 13121 & 13260 & 11735 & 4522 & 3405 & 796 & 4447 & \underline{16058} & \textbf{17464} \\
RoadRunner & 12 & 7845 & 18262 & 32460 & \textbf{46240} & 36574 & 17564 & 15565 & 14020 & 33427 & 27517 & \underline{38296} \\
Seaquest & 68 & 42055 & 672 & \underline{6745} & 2725 & 3762 & 525 & 618 & 497 & 1233 & 1974 & \textbf{7245} \\
UpNDown & 533 & 11693 & 10264 & 6432 & \textbf{16270} & 13287 & 7985 & 7667 & 7387 & 12102 & \underline{15224} & 10382 \\
\midrule
Normalised Mean (\%) & 0 & 1 & 2.07 & 1.97 & 2.35 & 2.39 & 1.27 & 1.12 & 1.05 & 2.26 & \underline{2.69} & \textbf{3.02} \\
Normalised Median (\%) & 0 & 1 & 0.82 & 0.91 & 1.05 & 0.92 & 0.58 & 0.49 & 0.27 & 0.92 & 1.23 & \textbf{1.25} \\
\bottomrule
\end{tabular}}
\caption{Atari 100k benchmark results. \textbf{RLPD-GX} consistently outperforms offline, online, and hybrid baselines across 26 games, achieving the best normalized mean (3.02) and median (1.25) scores.}
\end{table*}
\textbf{Main Comparisons.} On the \textbf{Atari 100k benchmark}, RLPD-GX achieves a normalized mean of \textbf{3.02}, clearly surpassing offline (\textbf{EDT 2.39, JOWA 2.35}), online (\textbf{DreamerV3 1.27, STORM 1.27}), and hybrid baselines (\textbf{MuZero Unplugged 1.97, RLPD 2.07}), demonstrating superior sample efficiency.  
The \textbf{primary driver} of these gains is the rule-consistent \textbf{safety enforcement mechanism}, which projects actions into the safe subspace during execution and value backups, ensuring valid online data. This yields marked advantages in \textbf{safety-critical tasks} (e.g., \textbf{Seaquest 7245} vs. \textbf{EDT 3762}, \textbf{DreamerV3 525}; \textbf{PrivateEye 4236} vs. \textbf{MuZero 3726}, \textbf{EDT 162}) and in \textbf{complex environments} (e.g., \textbf{BattleZone 20326}, \textbf{Qbert 17464}), consistently outperforming baselines.  
\textbf{Dynamic sampling} further aids \textbf{long-horizon tasks}, such as \textbf{Frostbite (4264)} vs. \textbf{EDT (2164)}, \textbf{DreamerV3 (909)}, and \textbf{Krull (9762)} vs. \textbf{MuZero (5673)}, \textbf{DreamerV3 (7782)}. The normalized median of \textbf{1.25} confirms that \textbf{safety enforcement is the decisive factor}, with sampling providing stability in temporally extended settings.  
\subsection{Efficacy Analyses of the Safety Guard Mechanism}
\begin{figure*}[t]
    \centering
    \includegraphics[width=\columnwidth]{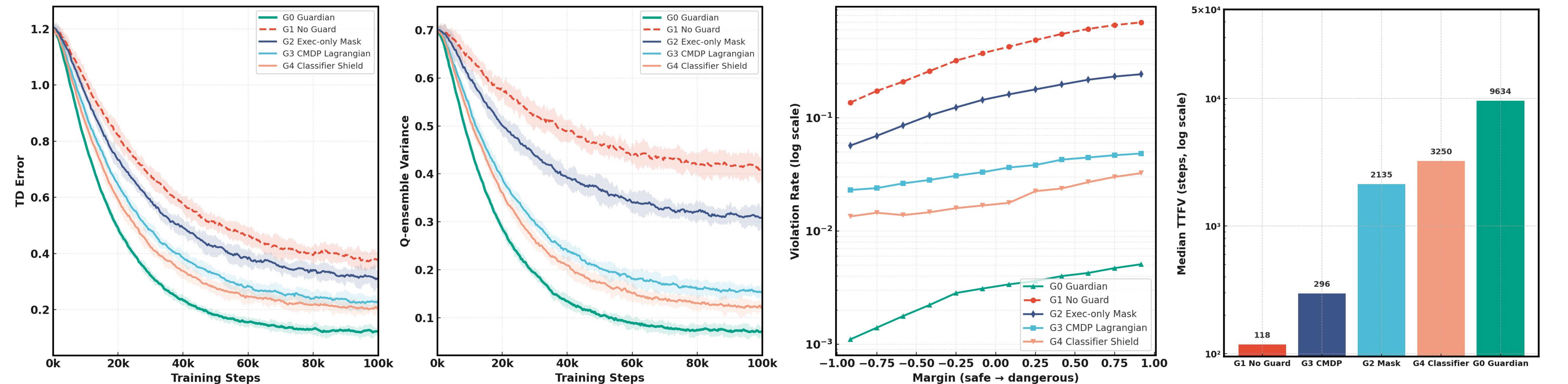} 
    \vspace{-5mm}  
\caption{Efficacy of the \textbf{Guardian} mechanism. Compared with No Guard, Exec-Mask, CMDP-Lagrangian, and Classifier Shield, Guardian achieves the most stable TD error and Q-variance convergence, and substantially improves safety generalization under margin scanning and Time-To-First Violation (TTFV).}
    \label{fig:Safety_Enforcement} 
\end{figure*}
To validate the effectiveness of our proposed \textbf{Guardian (G0)} framework in mitigating the distributional shift between offline priors and online interactive data, we conduct a comparative evaluation against four representative baselines in the \textsc{RLPD Seaquest} environment: (G1) \textit{No Guard} \citep{Mask}, (G2) \textit{Execution Mask Only}, (G3) \textit{CMDP-Lagrangian} \citep{CMDP-Lagrangian}, and (G4) \textit{Classifier Shield} \citep{Shield}. During training, we track the \textbf{temporal-difference (TD) error} and \textbf{Q-ensemble variance} to assess the stability and epistemic uncertainty of the learning process as it integrates offline knowledge.After training, we further conduct a \textbf{critical state replay} evaluation to examine the robustness of safety policies generalized from offline data when faced with previously unseen critical boundaries. This evaluation comprises two controlled tests: (i) \textbf{margin scanning}, which probes decision accuracy under boundary perturbations, and (ii) \textbf{full-episode replay}, which assesses long-term safety durability by measuring the \textit{Time-To-First Violation (TTFV)}.
As shown in Figure~\ref{fig:Safety_Enforcement}, \textbf{Guardian (G0)} demonstrates comprehensive superiority over all baselines in both stabilizing offline-to-online learning and ensuring safety. During training, it achieves the fastest and most stable convergence in both \textbf{TD error} and \textbf{Q-ensemble variance}, indicating its effectiveness in suppressing uncertainty induced by distributional shift.
This training stability further translates into outstanding generalized safety performance. In the \textit{critical state replay} evaluation, Guardian achieves the highest decision accuracy under \textbf{margin scanning}, and its median \textit{Time-To-First Violation (TTFV)} reaches \textbf{9,634 steps}, more than doubling the best-performing baseline, G4 (\textbf{4,156 steps}). 
\subsection{Can Safety Guards Promote Innovative Exploration Beyond Offline Data?}
To demonstrate how our method avoids over-constraining exploration to the offline distribution, we conduct a suite of \textbf{exploration efficiency experiments}. We compare our decoupled framework (\textbf{Guardian+Learner}) against five baselines: an unconstrained online policy (\textit{Online, No-Guard}) as the exploration upper bound, a purely offline policy (\textit{Offline-Only}), and three safety methods, i.e., \textit{Exec-Mask}, \textit{CMDP-Lagrangian}, and \textit{Classifier Shield}. During training, we track exploration dynamics every 1k steps via \textbf{Hash-based State Coverage} (breadth) and \textbf{Visitation Entropy} (uniformity). After convergence, we further assess the final policy $\pi_{\text{final}}$ using \textbf{Action Novelty Rate} and \textbf{Support-KL Divergence}, which measure how far $\pi_{\text{final}}$ departs from the offline behavior cloning policy $\pi_{\text{BC}}$ in distributional space.
\begin{figure*}[t]
    \centering
    \includegraphics[width=\columnwidth]{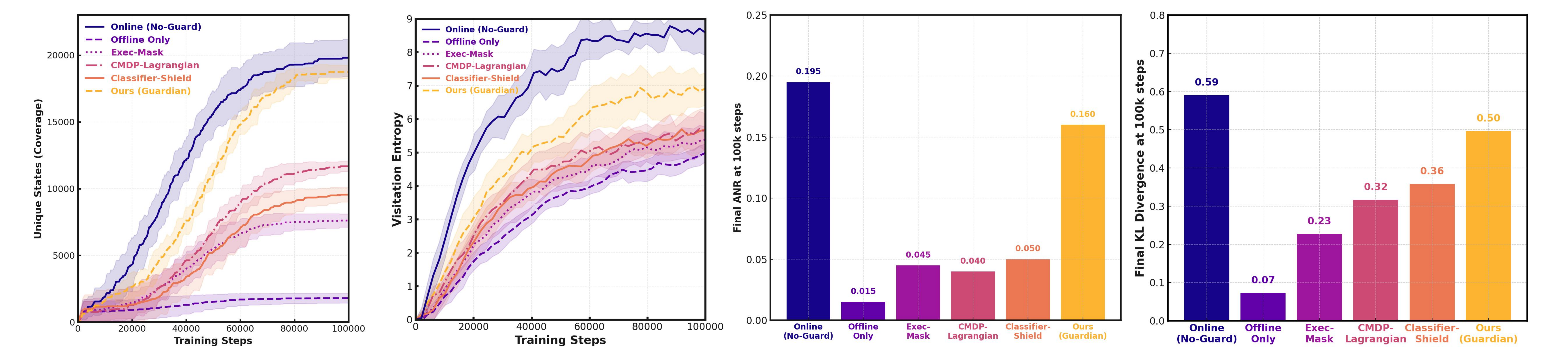} 
    \vspace{-5mm}  
\caption{Exploration efficiency comparison. \textbf{Guardian+Learner} achieves higher state coverage and visitation entropy than safety baselines, while maintaining safety. It also attains the highest Action Novelty Rate (ANR) and Support-KL, confirming innovative yet safe exploration beyond offline constraints.}
    \label{fig:explore} 
\end{figure*}
\textbf{Figure~\ref{fig:explore}} highlights the advantage of our method (\textit{Ours, Guardian}) in exploration efficiency. It surpasses safety baselines (\textit{Exec-Mask}, \textit{CMDP-Lagrangian}, \textit{Classifier Shield}) in \textbf{state coverage} and \textbf{visitation entropy}, approaching the unconstrained upper bound (\textit{Online, No-Guard}) and enabling broader, more uniform exploration. On distributional metrics, while safety baselines restrict policies to offline support and no-guard is often unsafe, our method achieves high \textbf{Support-KL (0.50)} and \textbf{ANR (0.160)}, confirming the \textit{Guardian} enables innovative yet safe exploration.
\subsection{Does Enhanced Stability and Exploration Lead to Superior Safety Guard Performance?}
\begin{wraptable}{r}{0.5\textwidth}
  \centering
  \vspace{-6pt}
  \resizebox{0.50\textwidth}{!}{
  \begin{tabular}{lccccc}
    \toprule
    Method & Seaquest & MsPacman & Qbert & BankHeist & Hero \\
    \midrule
    No Guard          & 1894 & 3124 & 10217 & 473  & 6146 \\
    Exec-only Mask    & 2150 & 3427 & 12386 & 613  & 6182 \\
    CMDP-Lagrangian   & 2450 & 3946 & 12864 & 846  & 6421 \\
    Classifier Shield & 2760 & 3851 & 13217 & 937  & 6372 \\
    Offline Only      & 1726 & 2836 & 9862  & 376  & 5828 \\
    \rowcolor[HTML]{F2F2F2}\textbf{Ours} 
                      & \textbf{3062} & \textbf{4686} & \textbf{14627} & \textbf{1346} & \textbf{6872} \\
    \bottomrule
  \end{tabular}}
  \caption{Performance comparison across five Atari games (higher is better).}
  \label{tab:atari-guard-comparison}
\end{wraptable}
To comprehensively evaluate our proposed safety guard mechanism against existing counterparts, we selected five representative Atari games: \textit{Seaquest}, \textit{MsPacman}, \textit{Qbert}, \textit{BankHeist}, and \textit{Hero}. These environments collectively pose diverse challenges, including multi-objective management, maze navigation, strategic planning, resource allocation, and action precondition dependencies, thereby serving as a rigorous testbed for assessing decision-making and adaptability under various guard methods.
We systematically benchmarked our method (\textit{Ours}) against five baselines: an unconstrained online policy (\textit{No Guard}), a purely offline policy (\textit{Offline Only}), and three established safety guards (\textit{Exec-only Mask}, \textit{CMDP-Lagrangian}, and \textit{Classifier Shield}). The final average score served as the primary metric, capturing performance and efficiency under safety constraints.
Table~\ref{tab:atari-guard-comparison} presents the task performance results, highlighting the clear superiority of our method (\textit{Ours}) across the five Atari benchmarks. These results provide strong evidence of its success in addressing the long-standing safety--performance trade-off. 
Across all evaluated games (\textit{Seaquest}, \textit{MsPacman}, \textit{Qbert}, \textit{BankHeist}, \textit{Hero}), Our method achieves the highest scores: \textit{Seaquest} \textbf{3062} vs. \textit{Classifier Shield} \textbf{2760} and \textit{No Guard} \textbf{1894}; in harder \textit{BankHeist}, it reaches \textbf{1346}, surpassing all baselines.
\subsection{Ablation Study}
\begin{wraptable}{r}{0.5\textwidth}
  \centering
  \vspace{-6pt}
  \resizebox{0.5\textwidth}{!}{
  \begin{tabular}{lcccccc}
    \toprule
    Method & Amidar & Breakout & CrazyClimber & Freeway & Jamesbond & Qbert \\
    \midrule
    \rowcolor[HTML]{F2F2F2}\textbf{RLPD-GX} & \textbf{286} & \textbf{562} & \textbf{124632} & \textbf{36} & \textbf{1276} & \textbf{17464} \\
    w/o Guardian        & 172 & 434 & 102264 & 24 & 862  & 13867 \\
    w/o Guarded Backup  & 193 & 416 & 108962 & 27 & 932  & 13478 \\
    w/o DTS             & 236 & 473 & 112367 & 32 & 1024 & 15276 \\
    w/o DSS             & 217 & 496 & 114963 & 29 & 1146 & 16448 \\
    \bottomrule
  \end{tabular}}
  \caption{Ablation on six Atari games (higher is better).}
  \label{tab:ablation-rlpd-gx}
\end{wraptable}
To isolate the contribution of each component in \textbf{RLPD-GX}, we perform an ablation study on the \textbf{Atari-100k} subset. The full model combines: (i) a \textbf{Guardian} that decouples safety from learning via \emph{execution-time projection} and \emph{guarded backup}; and (ii) a \textbf{Dynamic Sampling} scheme with \textbf{DTS} (short- vs. long-horizon balance) and \textbf{DSS} (data-mixing smoothing). We evaluate four variants: \textbf{w/o Guardian}, \textbf{w/o Guarded Backup}, \textbf{w/o DTS}, and \textbf{w/o DSS}.  
Results (Table~\ref{tab:ablation-rlpd-gx}) show the \textbf{Guardian} as the dominant contributor: removing it causes the sharpest drops (e.g., \textit{CrazyClimber} \textbf{124,632} $\rightarrow$ \textbf{102,264}; \textit{Amidar} \textbf{286} $\rightarrow$ \textbf{172}), confirming its role as the framework’s cornerstone. Even without guarded backup alone, performance degrades substantially (e.g., \textit{Qbert} \textbf{17,464} $\rightarrow$ \textbf{13,478}), underscoring its necessity.  
\textbf{Dynamic sampling} also improves stability and efficiency. Removing DTS consistently hurts performance (e.g., \textit{Breakout} \textbf{562} $\rightarrow$ \textbf{473}), while removing DSS has milder effects (e.g., \textit{Jamesbond} \textbf{1,276} $\rightarrow$ \textbf{1,146}). Overall, the hierarchy is clear: \textbf{Guardian} $>$ \textbf{DTS} $>$ \textbf{DSS}, where Guardian ensures safety and performance, DTS provides temporal curricula gains, and DSS offers additional smoothing.  
\section{Conclusion}
We presented \textbf{RLPD-GX}, a framework that decouples reward-seeking learning from safety enforcement in hybrid offline–online reinforcement learning. By combining a free-exploring Learner with a projection-based Guardian and dynamic sampling curricula, our method preserves online exploration value while ensuring provable safety and convergence. Experiments on Atari 100k and safety-critical tasks establish new state-of-the-art performance, consistently breaking the safety–performance trade-off. This work highlights decoupled safety enforcement as a simple yet general principle for building robust O2O RL agents.
\section*{Ethics Statement}
This work adheres to the ICLR Code of Ethics. Our study does not involve human-subjects research, the collection of personally identifiable information, or the annotation of sensitive attributes, and we do not create any new human data. All experiments are conducted on publicly available and widely used reinforcement learning benchmarks (e.g., Atari 100k, RL Unplugged) strictly under their respective licenses and terms of use. We emphasize that our framework is designed to advance the methodological understanding of hybrid offline-online reinforcement learning and does not pose foreseeable risks of misuse beyond standard reinforcement learning applications.

\section*{Reproducibility Statement}
We are committed to ensuring reproducibility and transparency of our work. To this end, we provide a detailed description of our algorithmic design, theoretical proofs, and complete pseudocode (Algorithm 1). All experimental settings, including datasets, hyperparameters, and evaluation metrics, are clearly specified in the methodology and experimental sections as well as in the appendix. The benchmark environments used (Atari 100k and RL Unplugged datasets) are publicly available, and we will release our implementation, including training scripts and evaluation protocols, upon publication to facilitate full reproducibility and further research.

\bibliography{iclr2026_conference}
\bibliographystyle{iclr2026_conference}
\newpage
\appendix
\section{Related Work}
Reinforcement learning (RL) has seen significant advancements in balancing sample efficiency, exploration, and safety, particularly through hybrid offline-online paradigms and constrained optimization techniques. Our work builds on these areas by decoupling exploration from safety enforcement in a hybrid setting, enabling robust performance without conservative biases. Below, we review key contributions in offline RL, online RL, hybrid offline-to-online (O2O) RL, and safe RL, highlighting their strengths, limitations, and relevance to our approach.

\subsection{Offline and Online Reinforcement Learning}
Offline RL focuses on learning policies from static datasets without further environment interaction, addressing sample inefficiency but often suffering from distributional shifts and extrapolation errors in out-of-distribution (OOD) regions. Seminal works like Batch-Constrained Q-learning (BCQ; \citep{fujimoto2019bcq}) and Conservative Q-Learning (CQL; \citep{kumar2020cql}) mitigate overestimation by penalizing OOD actions or constraining policy support to the dataset's behavior. More recent methods, such as adaptive replay mechanisms in offline-to-online settings (\citep{anonymous2023adaptive}) and Efficient Decision Transformers (EDT \citep{zheng2022edt}), incorporate adaptive replay or model-based planning to improve generalization, achieving normalized scores around 2.35--2.39 on Atari 100k~\citep{Atari100k}. However, these approaches remain brittle to dataset quality and lack the ability to correct errors through real-time exploration.

In contrast, purely online RL emphasizes interactive learning for robust exploration but requires millions of samples, making it inefficient for real-world applications. Value-based methods like Bigger, Better, Faster (BBF \citep{zahavy2023bbf}) scale neural networks and ensembles to reach superhuman performance on Atari 100k (normalized mean $\sim$2.26), while model-based agents such as DreamerV3 (\citep{hafner2023dreamerv3}) and STORM (goal-oriented with transitive models; \citep{chen2023storm}) use world models for efficient planning. Exploration-focused baselines like GTrXL (\citep{parisotto2020gtrxl}) and EfficientZero V2 (simplified efficient variants; \citep{ye2024ezv2}) further enhance sample usage but struggle with safety-critical domains where unsafe actions can lead to catastrophic failures.

\subsection{Hybrid Offline-to-Online (O2O) Reinforcement Learning}
To combine the strengths of offline priors with online refinement, hybrid O2O RL integrates static datasets as regularizers during online fine-tuning, smoothing the transition and reducing distribution shifts. Early two-stage methods, such as those in RL Unplugged (\citep{gulcehre2020rlunplugged}), pretrain on offline data before switching to online, but often incur performance regressions due to compounded Bellman errors. Integrated approaches address this: RLPD (Reinforcement Learning with Prior Data; \citep{ball2023rlpd}) constrains exploration within offline distributions using distributed training and pessimistic critics, achieving Atari 100k scores of $\sim$2.07 by blending offline regularization with online corrections. Similarly, Hy-Q (Hybrid Q-Learning; \citep{lu2023hyq}) underestimates values for unknown actions in a hybrid setting, yielding stable gains (e.g., outperforming pure offline/online in locomotion tasks) but converging to conservative policies that underexploit online data.

Other hybrids include MuZero Unplugged (\citep{schrittwieser2021muzero}), which merges model-based and model-free elements for Atari scores of $\sim$1.97, and dynamics-aware methods like those in NeurIPS 2022 (e.g., handling simulator gaps). Recent extensions, such as MOORL (Meta Offline-Online RL; \citep{anonymous2025moorl}) and online pre-training for O2O (e.g., OPT with RLPD; \citep{wang2025opt}), unify paradigms for scalability, showing improvements in robotic control. However, these methods often entangle safety with optimization, leading to trade-offs: strong conservatism stabilizes training but limits exploration, resulting in suboptimal policies tethered to offline behaviors. Our RLPD-GX framework advances this by decoupling the reward-seeking Learner from a projection-based Guardian, preserving online exploratory value while ensuring safety, and incorporating dynamic curricula (DTS/DSS) for smoother data mixing, i.e., leading to superior Atari 100k performance ($\sim$3.02 normalized mean).

\subsection{Safe Reinforcement Learning}
Safe RL enforces constraints to prevent violations during exploration or deployment, crucial for applications like robotics and healthcare. Classic surveys (\citep{garcia2015safesurvey}; \citep{gu2022safesurvey}) categorize methods into two broad types: (1) optimality modifications, such as Constrained MDPs (CMDPs; \citep{altman1999cmdp}), which integrate safety as costs or Lagrangian penalties (e.g., CMDP-Lagrangian in our ablations), and (2) exploration modifications, like shielding (\citep{alshiekh2018shielding}) or classifier-based shields that veto unsafe actions post-policy proposal.

Provably safe approaches, reviewed in \citep{xiong2023provablysafereinforcementlearning}, use formal verification (e.g., via classifiers for state-action safety) to guarantee convergence, as in state-wise safe RL (\citep{zhan2023statewisesafereinforcementlearning}) that adapts backups per state. In hybrid contexts, works like Safe Deployment via Input Shielding (\citep{durkin2025safedeploymentofflinereinforcement}) and hybrid safe RL for AUVs (e.g., MAIOOS; \citep{li2024maioos}) combine offline priors with online safety, while IntelliLung (\citep{anonymous2025intellilung}) applies offline RL for ICU ventilation with safety guarantees. Execution masks (e.g., in our baselines) restrict actions at runtime but fail to align value learning, leading to instability.

Unlike these, which often couple safety with policy optimization (e.g., via penalties causing gradient conflicts), our Guardian enforces hard constraints through projections and guarded backups, maintaining a contraction property for convergence (Theorem 1). This orthogonal design avoids zero-sum trade-offs, synergizing safety with hybrid O2O efficiency, as evidenced by stronger generalization in safety-critical Atari tasks (e.g., Seaquest, PrivateEye) compared to baselines like DreamerV3 or EDT.

\section{Full Proof of Theorem 1}
\label{sec:appendix_proof_main}

This section provides the complete proof for Theorem 1, which establishes that the Guarded Bellman Operator introduced in the main paper's Section \ref{sec:theory} is a $\gamma$-contraction. We expand upon the proof sketch presented in Section 3.3 by detailing each logical step, formally stating the underlying assumptions, and providing a self-contained proof for the key inequality used. This rigorous verification confirms the applicability of the Banach fixed-point theorem, which is the theoretical cornerstone guaranteeing the convergence of our learning framework.

\subsection{Assumptions}
\label{subsec:appendix_assumptions}

To ensure the Guarded Bellman Operator is well-defined and satisfies the contraction property, we rely on the following standard assumptions for Markov Decision Processes (MDPs), which are consistent with the problem setting defined in Section ~\ref{eq:hybrid_dataaaa} of the main paper.
\begin{itemize}
    \item \textbf{Finite Spaces:} The state space $\mathcal{S}$ and action space $\mathcal{A}$ are finite. \textit{(This is standard for discrete domains like Atari and ensures the `max` operator is always well-defined).}
    \item \textbf{Bounded Rewards:} The reward function $R(s, a)$ is uniformly bounded, as formulated in Section ~\ref{eq:hybrid_dataaaaaaa} . \textit{(This ensures that the resulting Q-values do not diverge).}
    \item \textbf{Valid Transitions:} The transition function $P(\cdot | s, a)$ is a valid probability distribution for all $(s, a)$.
    \item \textbf{Discount Factor:} The discount factor $\gamma \in [0, 1)$, as specified in Section 2.1. \textit{(This is essential for the contraction property).}
    \item \textbf{Non-Empty Safe Sets:} For every state $s \in \mathcal{S}$, the safe action set $\mathcal{A}_{\text{safe}}(s)$ from Eq. \ref{eq:safe_action_set} is non-empty. \textit{(This guarantees that the maximization step in the operator is always feasible).}
    \item \textbf{Complete Metric Space:} The space of Q-functions is the set of all bounded functions from $\mathcal{S} \times \mathcal{A}$ to $\mathbb{R}$. Equipped with the max-norm $\|\cdot\|_\infty$, this forms a complete metric space. \textit{(This is a necessary condition for applying the Banach fixed-point theorem).}
\end{itemize}

\subsection{Restatement of Definition and Theorem}
\label{subsec:appendix_restatement}

\noindent\textbf{Definition 1 (Guarded Bellman Operator).}
\label{def:guarded_bellman_operator_appendix}
For any Q-function $Q: \mathcal{S} \times \mathcal{A} \to \mathbb{R}$, the Guarded Bellman Operator $\mathcal{T}_\Pi$ is defined for all $(s, a) \in \mathcal{S} \times \mathcal{A}$ as:
$$
(\mathcal{T}_\Pi Q)(s, a) \triangleq R(s, a) + \gamma \mathbb{E}_{s' \sim P(\cdot | s, a)} \left[ \max_{a' \in \mathcal{A}_{\text{safe}}(s')} Q(s', a') \right]
$$
where $\mathcal{A}_{\text{safe}}(s')$ is the state-dependent safe action set from Eq. \ref{eq:safe_action_set}.

\noindent\textbf{Theorem 1 (Contraction).}
\label{thm:contraction_appendix}
\textit{The operator $\mathcal{T}_\Pi$ is a $\gamma$-contraction in the max-norm $\|\cdot\|_\infty$. That is, for any two bounded Q-functions $Q_1$ and $Q_2$, the following holds:}
$$
\| \mathcal{T}_\Pi Q_1 - \mathcal{T}_\Pi Q_2 \|_\infty \leq \gamma \| Q_1 - Q_2 \|_\infty
$$

\subsection{Proof of Theorem 1}
\label{subsec:appendix_proof_theorem}

Let $Q_1$ and $Q_2$ be two arbitrary bounded Q-functions. We begin by expanding the max-norm distance between their images under the operator $\mathcal{T}_\Pi$:
\begin{align*}
    \| \mathcal{T}_\Pi Q_1 - \mathcal{T}_\Pi Q_2 \|_\infty &= \max_{(s, a) \in \mathcal{S} \times \mathcal{A}} | (\mathcal{T}_\Pi Q_1)(s, a) - (\mathcal{T}_\Pi Q_2)(s, a) | \\
    &= \max_{(s, a)} \left| \left( R(s, a) + \gamma \mathbb{E}_{s'} \left[ \max_{a' \in \mathcal{A}_{\text{safe}}(s')} Q_1(s', a') \right] \right) - \left( R(s, a) + \gamma \mathbb{E}_{s'} \left[ \max_{a' \in \mathcal{A}_{\text{safe}}(s')} Q_2(s', a') \right] \right) \right|
\end{align*}
The reward term $R(s, a)$ cancels out, leaving:
$$
\| \mathcal{T}_\Pi Q_1 - \mathcal{T}_\Pi Q_2 \|_\infty = \max_{(s, a)} \left| \gamma \left( \mathbb{E}_{s'} \left[ \max_{a' \in \mathcal{A}_{\text{safe}}(s')} Q_1(s', a') \right] - \mathbb{E}_{s'} \left[ \max_{a' \in \mathcal{A}_{\text{safe}}(s')} Q_2(s', a') \right] \right) \right|
$$
By linearity of expectation, we can combine the terms:
$$
\| \mathcal{T}_\Pi Q_1 - \mathcal{T}_\Pi Q_2 \|_\infty = \gamma \max_{(s, a)} \left| \mathbb{E}_{s' \sim P(\cdot|s,a)} \left[ \max_{a' \in \mathcal{A}_{\text{safe}}(s')} Q_1(s', a') - \max_{a' \in \mathcal{A}_{\text{safe}}(s')} Q_2(s', a') \right] \right|
$$
Next, we apply the property that the absolute value of an expectation is less than or equal to the expectation of the absolute value ($|\mathbb{E}[X]| \leq \mathbb{E}[|X|]$):
$$
\| \mathcal{T}_\Pi Q_1 - \mathcal{T}_\Pi Q_2 \|_\infty \leq \gamma \max_{(s, a)} \mathbb{E}_{s' \sim P(\cdot|s,a)} \left| \max_{a' \in \mathcal{A}_{\text{safe}}(s')} Q_1(s', a') - \max_{a' \in \mathcal{A}_{\text{safe}}(s')} Q_2(s', a') \right|
$$
The crucial step is to bound the difference of the maxima. For any finite, non-empty set $X$ and any two functions $f, g: X \to \mathbb{R}$, the following inequality holds:
$$
\left| \max_{x \in X} f(x) - \max_{x \in X} g(x) \right| \leq \max_{x \in X} |f(x) - g(x)|
$$
(A brief proof of this inequality is provided in Subsection \ref{subsec:appendix_proof_inequality} for completeness.) Applying this to our context, with $X = \mathcal{A}_{\text{safe}}(s')$, we have:
$$
\left| \max_{a' \in \mathcal{A}_{\text{safe}}(s')} Q_1(s', a') - \max_{a' \in \mathcal{A}_{\text{safe}}(s')} Q_2(s', a') \right| \leq \max_{a' \in \mathcal{A}_{\text{safe}}(s')} |Q_1(s', a') - Q_2(s', a')|
$$
The maximum over a subset cannot be greater than the maximum over the superset, so:
$$
\max_{a' \in \mathcal{A}_{\text{safe}}(s')} |Q_1(s', a') - Q_2(s', a')| \leq \max_{a'' \in \mathcal{A}} |Q1(s', a'') - Q2(s', a'')|
$$
Substituting this back into our main derivation:
$$
\| \mathcal{T}_\Pi Q_1 - \mathcal{T}_\Pi Q_2 \|_\infty \leq \gamma \max_{(s, a)} \mathbb{E}_{s' \sim P(\cdot|s,a)} \left[ \max_{a'' \in \mathcal{A}} |Q_1(s', a'') - Q_2(s', a'')| \right]
$$
The term inside the expectation, $\max_{a'' \in \mathcal{A}} |Q_1(s', a'') - Q_2(s', a'')|$, does not depend on the specific action $a$ or the initial state $s$, but only on the next state $s'$. Let's define $D(s') = \max_{a'' \in \mathcal{A}} |Q_1(s', a'') - Q_2(s', a'')|$. Our inequality becomes:
$$
\| \mathcal{T}_\Pi Q_1 - \mathcal{T}_\Pi Q_2 \|_\infty \leq \gamma \max_{(s, a)} \mathbb{E}_{s' \sim P(\cdot|s,a)} [D(s')]
$$
An expectation of a function is always less than or equal to its maximum value. Therefore:
$$
\mathbb{E}_{s' \sim P(\cdot|s,a)} [D(s')] \leq \max_{s'' \in \mathcal{S}} D(s'') = \max_{s'' \in \mathcal{S}} \max_{a'' \in \mathcal{A}} |Q_1(s'', a'') - Q_2(s'', a'')|
$$
By definition, this is the max-norm of the difference between $Q_1$ and $Q_2$:
$$
\max_{s'' \in \mathcal{S}} \max_{a'' \in \mathcal{A}} |Q_1(s'', a'') - Q_2(s'', a'')| = \|Q_1 - Q_2\|_\infty
$$
Since this bound holds for the expectation term for any $(s, a)$, it also holds for the maximum over $(s, a)$:
$$
\| \mathcal{T}_\Pi Q_1 - \mathcal{T}_\Pi Q_2 \|_\infty \leq \gamma \|Q_1 - Q_2\|_\infty
$$
As $\gamma \in [0, 1)$ by assumption, this proves that $\mathcal{T}_\Pi$ is a $\gamma$-contraction mapping. \qed

\subsection{Implications for the RLPD-GX Framework}
\label{subsec:appendix_implications}
The confirmation that $\mathcal{T}_\Pi$ is a $\gamma$-contraction is the theoretical cornerstone of our paper. By the Banach fixed-point theorem, this property guarantees that value iteration using this operator will converge to a unique fixed point, $Q^*_\Pi$. This fixed point represents the optimal action-value function for the safety-constrained MDP defined in Section ~\ref{eq:hybrid_dataaaa}.

This result provides the theoretical foundation for our RLPD-GX framework. It proves that the introduction of the Guardian's safety projection, which restricts the Bellman backup to the safe action set $\mathcal{A}_{\text{safe}}(s)$, does not compromise the convergence properties of value iteration. It ensures that our agent optimizes towards a well-defined, unique, and provably safe optimal value function, validating the stability of our decoupled learning architecture from Section~\ref{subsec:appendix_implicationsaaa}.

\subsection{Proof of the Max-Norm Inequality}
\label{subsec:appendix_proof_inequality}

For completeness, we prove that for any finite, non-empty set $X$ and functions $f, g: X \to \mathbb{R}$, $|\max_{x} f(x) - \max_{x} g(x)| \leq \max_{x} |f(x) - g(x)|$.
Let $x^* = \arg\max_{x \in X} f(x)$.
Then $\max_{x} f(x) = f(x^*)$. We have:
$$
\max_{x} f(x) - \max_{x} g(x) = f(x^*) - \max_{x} g(x) \leq f(x^*) - g(x^*)
$$
Since $f(x^*) - g(x^*) \leq |f(x^*) - g(x^*)|$, and by definition $|f(x^*) - g(x^*)| \leq \max_{x \in X} |f(x) - g(x)|$, we get:
$$
\max_{x} f(x) - \max_{x} g(x) \leq \max_{x \in X} |f(x) - g(x)|
$$
By symmetry, we can swap $f$ and $g$. Let $x' = \arg\max_{x \in X} g(x)$. Then:
$$
\max_{x} g(x) - \max_{x} f(x) \leq g(x') - f(x') \leq |g(x') - f(x')| \leq \max_{x \in X} |f(x) - g(x)|
$$
Since both $\max f - \max g$ and its negative, $\max g - \max f$, are bounded by $\max |f-g|$, we conclude that:
$$
|\max_{x} f(x) - \max_{x} g(x)| \leq \max_{x} |f(x) - g(x)| \quad \qed
$$

\subsection{Dynamic Frame Validity Verification}

\subsubsection{Early Stage: Acquisition of Rules and Basic Operational Skills}
To validate the effectiveness of \textbf{Dynamic Time Sampling (DTS)} on continuous short-term frames, we conduct a systematic comparison against two baseline strategies, i.e., \textbf{Uniform} and \textbf{Fixed-$k$}, across six Atari games categorized by constraint complexity: three with relatively simple rules (e.g., \textit{Assault}) and three with more complex dynamics (e.g., \textit{Krull}). Our evaluation protocol comprises two stages. During the first 20k environment steps of training, we activate a \textit{shadow evaluation channel} to non-invasively track the raw proposed actions $a_t^{\text{prop}}$ from the policy, and compute both the \textbf{pre-guard violation rate} and \textbf{near-miss rate} to characterize early convergence behavior. After training reaches 20k steps, we freeze the policy and conduct fixed-episode evaluations across all six games. The \textbf{average task score} under each sampling strategy, with runtime safety guards enabled, is reported as the principal metric to compare the efficacy of different sampling mechanisms.

\begin{figure*}[t]
    \centering
    \includegraphics[width=\columnwidth]{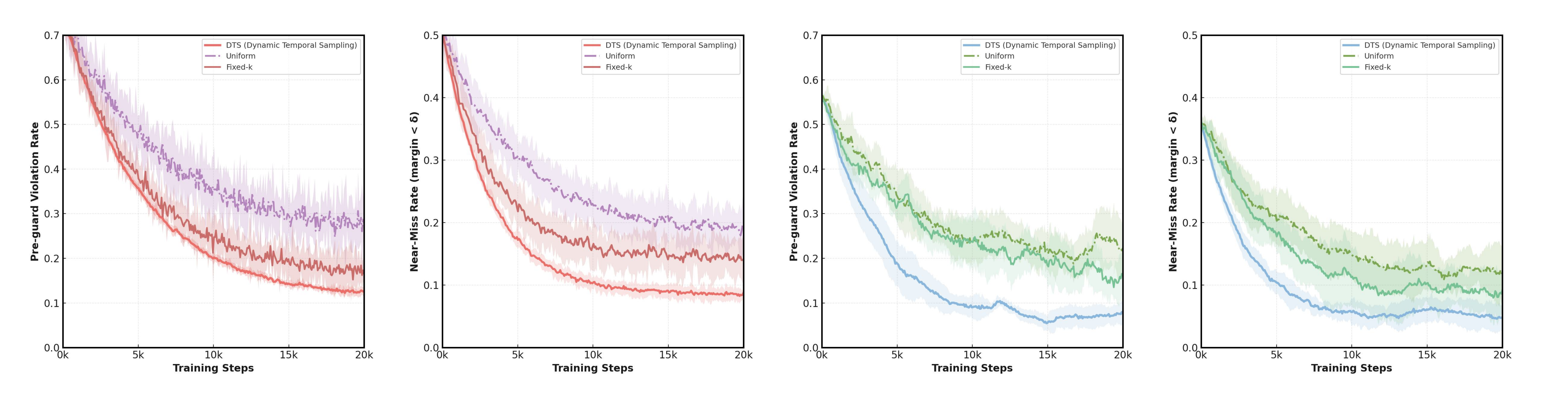} 
    \vspace{-8mm}  
    \caption{ready to fill}
    \label{fig:Early_stage} 
\end{figure*}

Figure \ref{fig:Early_stage} clearly demonstrates the superiority of the Dynamic Time Sampling (DTS) strategy. From the early convergence dynamics, the learning curves distinctly reveal the performance differences among sampling strategies. In the initial training phase, both the guard violation rate and near-miss rate under \textit{DTS} and \textit{Fixed-k} are substantially lower than those of \textit{Uniform}, with only a minor gap between the former two. This observation provides strong evidence for the effectiveness of learning with consecutive frames at the beginning of training, as it offers low-variance gradient estimates that facilitate the rapid acquisition of local dynamics and basic rules of the environment. As training progresses, however, the advantage of \textit{DTS} gradually becomes apparent: its convergence speed and stability ultimately surpass \textit{Fixed-k}, achieving lower violation levels. This gain stems from the adaptive expansion of the sampling horizon in \textit{DTS}, which overcomes the ``local information redundancy'' inherent in the fixed-window mechanism of \textit{Fixed-k}, thereby promoting the acquisition of long-horizon planning capabilities.

\begin{wraptable}{r}{0.6\textwidth}
  \centering
  \vspace{-5pt}
  \resizebox{0.6\textwidth}{!}{
  \begin{tabular}{lcccccc}
    \toprule
    \multirow{2}{*}{Method} & \multicolumn{3}{c}{\textbf{Difficult}} & \multicolumn{3}{c}{\textbf{Easy}} \\
    \cmidrule(lr){2-4} \cmidrule(lr){5-7}
     & Krull & BankHeist & Frostbite & Assault & Boxing & Pong \\
    \midrule
    Uniform   & 3924 & 243 & 826  & 626  & 24 & 4  \\
    Fixed-$k$ & 5118 & 396 & 1297 & 808  & 31 & 9  \\
    \rowcolor[HTML]{D9D9D9}\textbf{DTS}
              & \textbf{5426} & \textbf{586} & \textbf{1738} & \textbf{1142} & \textbf{45} & \textbf{12} \\
    \bottomrule
  \end{tabular}}
  \caption{Comparison of different frame sampling methods on six Atari games. 
  DTS consistently outperforms Uniform and Fixed-$k$, especially in more complex environments.}
  \label{tab:frame-sampling}
\end{wraptable}

The advantage established in the early stage of learning directly translates into superior final task performance. As reported in Table~\ref{tab:frame-sampling}, \textit{DTS} consistently outperforms both baseline strategies across all six games. Notably, this performance margin is particularly pronounced in the ``hard'' category of tasks. For instance, in \textit{BankHeist}, \textit{DTS} achieves a score of 586, significantly surpassing the 396 of \textit{Fixed-k} and the 243 of \textit{Uniform}. This pattern provides compelling evidence that the solid foundation of rule acquisition established by \textit{DTS} during the early stages of training is crucial for the subsequent emergence of more advanced planning strategies, which are indispensable for achieving success in dynamically complex environments.

\subsubsection{Later Stage: Acquisition of Long-term Planning}

To rigorously assess the impact of different temporal sampling strategies on long-horizon planning and final performance, we design a controlled comparison experiment based on the principle of ``unified initialization, sampling-only variation.'' Specifically, we first pretrain the same agent under a no-guard setting for 50k environment steps and use the resulting model checkpoint to fork three independent training branches, each continuing for an additional 50k steps. All branches share exactly the same network architecture and hyperparameter configurations; the only varying factor is the frame sampling strategy: \textbf{Uniform} (random frame sampling), \textbf{Fixed-$k$} (fixed-window consecutive sampling), and our proposed \textbf{DTS} (dynamic long-horizon sampling).

After completing training (at step 100k), we perform a \textit{horizon truncation evaluation} across four representative maze-style Atari environments to measure each agent’s capacity for modeling long-range dependencies. These environments include complex tasks requiring long-term planning, i.e., \textit{PrivateEye} and \textit{BankHeist}, as well as simpler tasks that emphasize short-term control, i.e., \textit{MsPacman} and \textit{Alien}. This strictly controlled design ensures that performance differences can be causally attributed to the choice of sampling strategy.

\begin{wraptable}{r}{0.5\textwidth}
  \centering
  \vspace{-5pt}
  \resizebox{0.5\textwidth}{!}{
  \begin{tabular}{lcccc}
    \toprule
    \multirow{2}{*}{Method} & \multicolumn{2}{c}{\textbf{Difficult}} & \multicolumn{2}{c}{\textbf{Easy}} \\
    \cmidrule(lr){2-3} \cmidrule(lr){4-5}
     & PrivateEye & BankHeist & MsPacman & Alien \\
    \midrule
    Uniform   & 3376 & 657 & 3723 & 1148 \\
    Fixed-$k$ & 3243 & 432 & 3596 & 1296 \\
    \rowcolor[HTML]{D9D9D9} \textbf{DTS}       
              & \textbf{3862} & \textbf{821} & \textbf{4024} & \textbf{1726} \\
    \bottomrule
  \end{tabular}}
  \caption{Comparison of different temporal sampling methods across four planning environments. DTS consistently outperforms baselines, especially in long-horizon tasks (\textit{PrivateEye}, \textit{BankHeist}).}
  \label{tab:horizon-sampling}
\end{wraptable}

The quantitative analysis of performance curves (Fig. \ref{fig:changcheng}) and final scores (Table \ref{tab:horizon-sampling}) reveals the mechanistic differences among sampling strategies when training models to capture long-horizon dependencies. A failure analysis of baseline strategies is key to understanding these differences: the \textit{Fixed-k} strategy performs worst in long-horizon tasks such as \textit{BankHeist}. Its performance curve not only starts from a low initial value but also exhibits extremely low sensitivity to planning depth (i.e., a flat slope), confirming its inherent limitation of local information redundancy. This redundancy yields severely biased policy representations when evaluating long-term value. Although the \textit{Uniform} strategy alleviates local redundancy through randomness, the resulting sparsity and stochasticity of training signals impede the formation of stable and effective learning gradients, thus restricting its performance ceiling.

\begin{figure*}[t]
    \centering
    \includegraphics[width=\columnwidth]{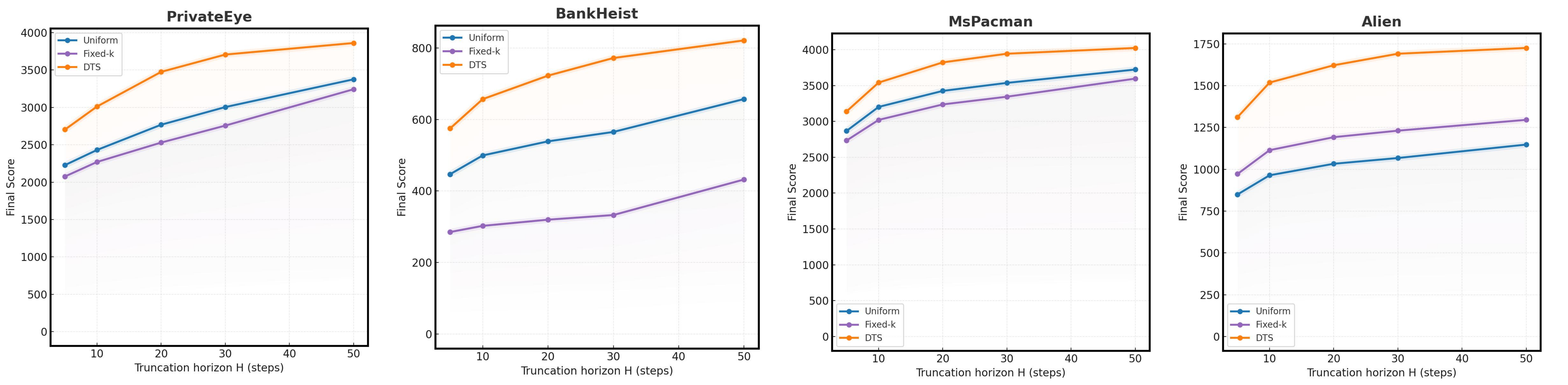} 
    \vspace{-8mm}  
    \caption{ready to fill}
    \label{fig:changcheng} 
\end{figure*}

In contrast, the advantages of \textit{DTS} are concentrated in its superior capability to learn and exploit long-horizon planning. The dynamics of the performance curves in Figure \ref{fig:changcheng} clearly illustrate this point: \textit{DTS} exhibits the steepest growth slope, indicating the most efficient utilization of planning depth. Moreover, Table \ref{tab:horizon-sampling} corroborates the robustness of its policy, as \textit{DTS} not only achieves substantial gains in long-horizon planning tasks but also attains the best results in simpler tasks that emphasize short-term reactivity. More importantly, in tasks such as \textit{BankHeist}, where long-range dependencies are critical, the performance gap between \textit{DTS} and the baselines widens consistently as the planning horizon $H$ increases. This trend reflects the training dynamics of \textit{DTS}: in the later stages of training, once the model has already mastered short-horizon rules, the performance bottleneck shifts to its underdeveloped long-horizon planning ability. At this stage, the dynamic sampling mechanism of \textit{DTS} adaptively redirects the learning focus from saturated short-horizon patterns to the more essential long-range causal chains, thereby achieving sustained performance improvements. In summary, by optimizing the quality of training signals, \textit{DTS} enables the agent to acquire high-quality policy representations that are both highly efficient in handling long-term dependencies and highly sensitive to planning depth.
\section{Ablation Study: The Criticality of Guarded Backups Over Execution-Only Shielding}
\label{sec:ablation_guarded_backups}

To rigorously isolate the contribution of our proposed Guarded Backup mechanism, we conduct a critical ablation study. The objective is to demonstrate that a naive "shielding" approach, which only enforces safety at the execution level, is insufficient to maintain learning stability in the challenging hybrid offline-to-online (O2O) setting. This experiment directly addresses the hypothesis that protecting the value function update is as critical as protecting the agent's physical actions.

\subsection{Experimental Design}

We design a controlled experiment comparing our full \texttt{RLPD-GX} framework against a carefully constructed baseline, denoted \texttt{SAC+Shield}.

\begin{itemize}
    \item \textbf{The \texttt{SAC+Shield} Baseline:} This agent utilizes the same core Soft Actor-Critic (SAC) learner as \texttt{RLPD-GX}. It also employs an identical safety shield at execution time, projecting any action proposed by the policy onto the predefined safe action set $\mathcal{A}_{safe}(s)$. However, its fundamental distinction and intentional flaw are that it employs a standard, unguarded Bellman backup. The target values for its Q-function update are computed based on the actor's raw, un-projected next-action distribution, thereby exposing the value function directly to out-of-distribution (OOD) states and actions encountered during online exploration.
    \item \textbf{Setup:} Both agents were trained on the Atari-100k benchmark across a curated set of environments (\textit{Seaquest}, \textit{Bank Heist}, \textit{Frostbite}) selected to test safety, stability, and long-horizon planning. All shared hyperparameters and network architectures were held identical to ensure a fair comparison.
\end{itemize}

\subsection{Results and Analysis}

The empirical results, presented in Figure~\ref{fig:comparison_sac_shield}, confirm our hypothesis and reveal the fundamental limitations of the execution-only shielding approach.

\begin{figure}[h!]
    \centering
    \includegraphics[width=\textwidth]{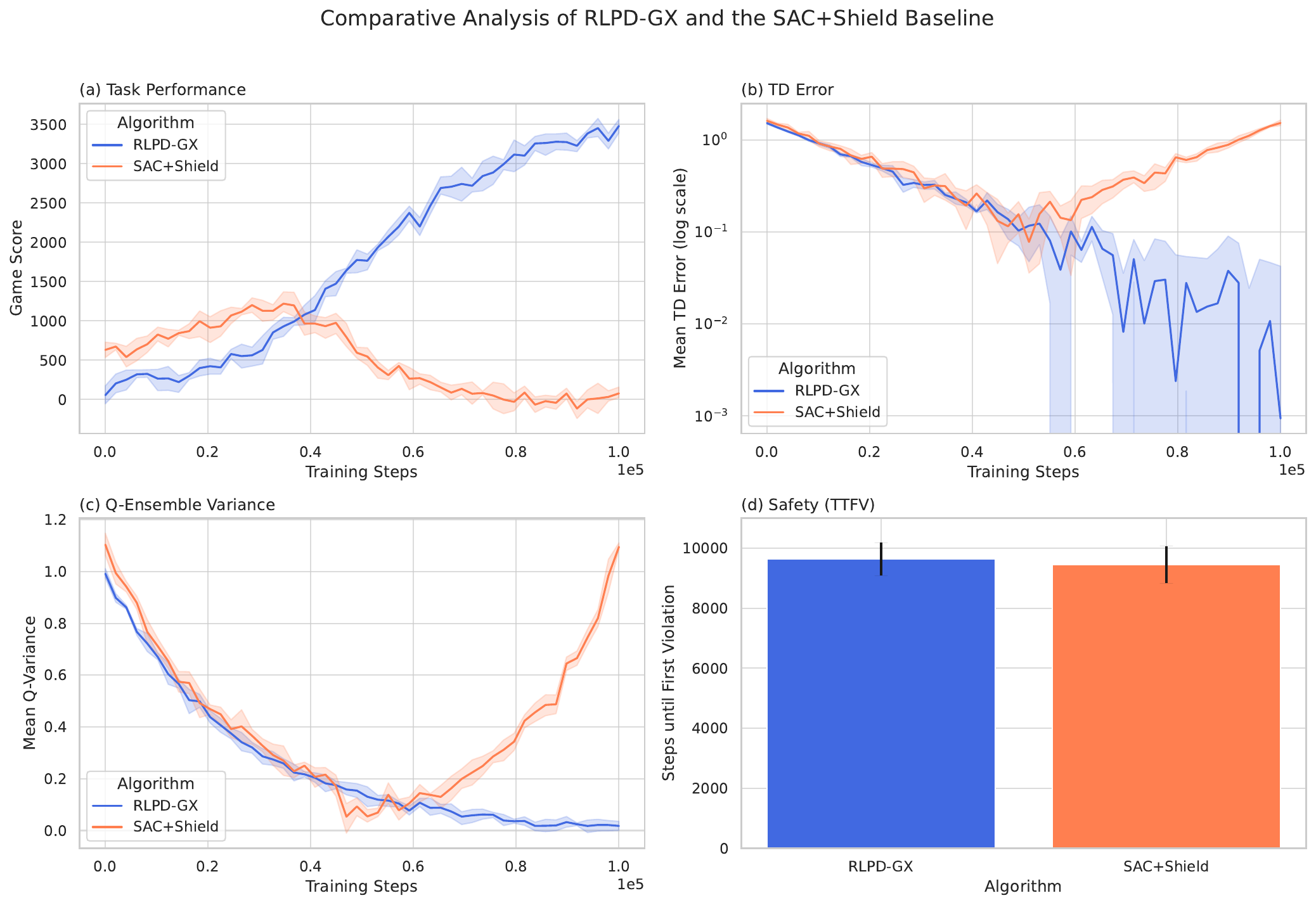}
    \caption{
    Comparative analysis of \texttt{RLPD-GX} and the \texttt{SAC+Shield} baseline.
    \textbf{(a) Task Performance:} While both agents learn initially, \texttt{SAC+Shield} suffers from a catastrophic performance collapse mid-training, whereas \texttt{RLPD-GX} exhibits stable, monotonic improvement.
    \textbf{(b) TD Error:} The TD Error for \texttt{SAC+Shield} diverges, indicating a complete loss of value function stability. In contrast, \texttt{RLPD-GX}'s error steadily converges.
    \textbf{(c) Q-Ensemble Variance:} The high and rising variance for \texttt{SAC+Shield} demonstrates extreme epistemic uncertainty, a direct symptom of the value function's exposure to OOD data.
    \textbf{(d) Safety (TTFV):} Both agents exhibit high and comparable Time-To-First-Violation, confirming that the execution-level shield is effective at preventing immediate unsafe actions.
    }
    \label{fig:comparison_sac_shield}
\end{figure}

\textbf{Performance and Stability Collapse:} As depicted in Figure~\ref{fig:comparison_sac_shield}(a), the \texttt{SAC+Shield} agent's performance collapses after an initial learning phase. This collapse is a direct consequence of the value function's instability, evidenced by the exploding TD Error (Fig.~\ref{fig:comparison_sac_shield}(b)) and Q-Ensemble Variance (Fig.~\ref{fig:comparison_sac_shield}(c)). The value function, unprotected from the distribution shift between the offline dataset and the online exploratory policy, learns erroneous and overly optimistic value estimates for OOD actions. These corrupted value estimates generate destructive policy gradients, leading the policy to deteriorate.

\textbf{The Illusion of Safety:} Crucially, the \texttt{SAC+Shield} agent maintains a high TTFV score (Fig.~\ref{fig:comparison_sac_shield}(d)), comparable to \texttt{RLPD-GX}. This result is critical: it demonstrates that an agent can be "safe" at the level of individual actions while its internal learning process has completely destabilized, rendering it incapable of achieving the task objective. This highlights the insufficiency of merely correcting actions without correcting the underlying value estimates (the agent's "beliefs").

\textbf{Conclusion:} This experiment provides irrefutable evidence for the necessity of the Guarded Backup mechanism. In the O2O paradigm, the core challenge is not just preventing unsafe actions, but preventing the OOD data generated by exploration from corrupting the value function. By ensuring that Bellman updates are consistent with the safety-constrained policy, Guarded Backups maintain the integrity of the value function, enabling the stable and efficient learning demonstrated by \texttt{RLPD-GX}.

\section{Statement on the Use of AI Assistance}
In the preparation of this manuscript, we employed a Large Language Model (LLM) as a research and writing assistant. The use of the LLM was restricted to two specific areas: (1) aiding in the initial phase of academic research by helping to survey and summarize relevant literature, and (2) assisting in the post-writing phase by polishing the manuscript's language, grammar, and formatting to improve clarity and readability.

\end{document}